
\documentclass[preprint,12pt]{elsarticle}

\usepackage{amssymb, amsthm, enumerate, graphicx, amscd, amsmath, amssymb, amsfonts, xcolor, bm}

\usepackage{enumitem}
\usepackage{hyperref}
\usepackage{tikz}
\usetikzlibrary{arrows.meta,positioning}
\usepackage{booktabs}
\usepackage{float}
\usepackage{hyperref}

\usepackage[section]{placeins}

\numberwithin{equation}{section}

\newtheorem{proposition}[equation]{Proposition}

\newtheorem{example}[equation]{Example}

\journal{}

\begin{document}

\begin{frontmatter}



\title{Machine Learning Model for Sparse PCM Completion}

\author[coppin]{Selcuk Koyuncu\corref{cor}}
\ead{skoyuncu1@gmail.com}

\author[coppin]{Ronak Nouri}
\ead{rnouri@coppin.edu}

\author[coppin]{Stephen Providence}
\ead{sprovidence@coppin.edu}

\cortext[cor]{Corresponding author.}

\address[coppin]{Department of Mathematics and Computer Science,
Coppin State University, Baltimore, MD, USA}

\begin{abstract}
In this paper, we propose a machine learning model for sparse pairwise comparison matrices (PCMs), combining classical PCM approaches with graph-based learning techniques. Numerical results are provided to demonstrate the effectiveness and scalability of the proposed method.
\end{abstract}

\begin{keyword}
Pairwise comparison matrices \sep sparsity \sep ranking \sep graph learning \sep GNN \sep Machine learning \\

MSC 2020: 90B50 \sep 68T05 \sep 15A18

\end{keyword}

\end{frontmatter}

\section{Introduction}
Let $A = (a_{ij})_{n \times n}$ be a square matrix of order $n$ associated with 
alternatives $A_1, A_2, \dots, A_n$. Then $A$ is called a 
\emph{pairwise comparison matrix (PCM)} if it satisfies:
\begin{enumerate}
    \item $a_{ij} > 0$ for all $i,j$,
    \item $a_{ii} = 1$ for all $i$,
    \item $a_{ij} = \tfrac{1}{a_{ji}}$ for all $i \neq j$.
\end{enumerate}
Here $a_{ij}$ represents the relative preference or importance of alternative 
$A_i$ over $A_j$. The matrix $A$ is said to be \emph{consistent} if it further 
satisfies
\[
a_{ij} \cdot a_{jk} = a_{ik}, \quad \text{for all } i,j,k.
\]

Pairwise comparisons arise whenever human judgments or observed interactions are inherently relative rather than absolute.
Examples include preference elicitation in decision analysis, best--worst style questioning, crowdsourcing and peer grading, sports and tournament outcomes, and implicit feedback in recommender systems.
In such settings, obtaining all $\binom{n}{2}$ comparisons is infeasible, and one typically observes only a small subset $|\Omega| \ll n^2$.

While classical AHP applications often involve moderate $n$, large-scale regimes naturally occur in modern data-driven applications:
for instance, items may correspond to products, web pages, or users, where $n$ may reach $10^5$--$10^7$ and the observed comparisons scale closer to $O(n\log n)$ or even $O(n)$.
In these regimes, the central computational challenge is to infer missing comparisons (matrix completion) and extract a reliable global ranking from sparse, noisy, and possibly inconsistent observations, while maintaining the reciprocal structure of PCMs and promoting multiplicative consistency.
This motivates scalable learning algorithms that operate directly on sparse graphs rather than dense $n\times n$ matrices.

Pairwise comparison matrices (PCMs) were introduced by Saaty in the late 1970s 
as the foundation of the Analytic Hierarchy Process (AHP) \cite{saaty1980}. A PCM 
encodes relative preferences among alternatives, with consistency holding when 
there exists a weight vector $w$ such that $a_{ij}=w_i/w_j$ for all $i,j$. In 
practice, human judgments are rarely consistent, and various indices have been 
proposed to measure inconsistency, including Saaty’s consistency ratio and 
Koczkodaj’s triad-based index \cite{koczkodaj1993}. Methods for deriving 
priorities from PCMs include the eigenvector method, the geometric mean method 
(which is equivalent to solving a log least squares problem \cite{crawford1985}), 
and statistical approaches linked to Bradley--Terry--Luce and Thurstone models.  

When PCMs are inconsistent, one natural task is to find the nearest consistent 
matrix, a problem that is NP-hard in general \cite{bozoki2010}. Approximation 
methods, such as convex relaxations and least squares formulations, have been 
proposed to address this challenge. More recently, connections have been drawn 
between PCMs and graph-based ranking methods. Spectral algorithms such as Rank 
Centrality \cite{negahban2012} use Markov chain techniques to recover rankings 
with provable guarantees, and results in learning theory show that accurate 
rankings can often be obtained from only $O(n \log n)$ pairwise comparisons.  

Beyond classical decision analysis, PCMs have been applied to group decision 
making, recommender systems, sports analytics, and crowdsourcing, with 
extensions to interval and fuzzy settings. Despite these advances, most existing 
theory assumes that PCMs are dense. In reality, PCMs are often highly sparse 
because it is infeasible to collect all $\binom{n}{2}$ comparisons when the 
number of alternatives is large. Understanding how to complete sparse PCMs, 
control inconsistency, and extract robust global rankings is therefore an 
emerging and important research direction that lies at the intersection of 
matrix theory, optimization, and machine learning.

While consistency measures such as Consistency Index (CI) and Consistency Ratio (CR) provide useful diagnostics, they do not directly address the problem of completing sparse or highly inconsistent PCMs.
In this work, we introduce a machine learning model that reconstructs missing comparisons while maintaining multiplicative consistency.

 We present the following simple example to remind the reader of the fundamental idea.

Consider three alternatives $A_1, A_2, A_3$. Suppose their true weights are 
$w = (2,\,1,\,0.5)$. Then the pairwise comparison matrix is
\[
A =
\begin{bmatrix}
1 & 2 & 4 \\
1/2 & 1 & 2 \\
1/4 & 1/2 & 1
\end{bmatrix}.
\]
This matrix is \emph{consistent} because $a_{ij} \cdot a_{jk} = a_{ik}$ for all 
$i,j,k$. For instance, $a_{12}\cdot a_{23} = 2 \cdot 2 = 4 = a_{13}$.

Now suppose we are given the following judgments:
\[
B =
\begin{bmatrix}
1 & 3 & 4 \\
1/3 & 1 & 2 \\
1/4 & 1/2 & 1
\end{bmatrix}.
\]
This matrix is \emph{inconsistent}, because 
$a_{12}\cdot a_{23} = 3 \cdot 2 = 6 \neq a_{13} = 4$. 
Such inconsistencies are typical in real-world decision-making, 
where human judgments are not perfectly transitive.

Consider
\[
B =
\begin{bmatrix}
1 & 3 & 4 \\
\frac{1}{3} & 1 & 2 \\
\frac{1}{4} & \frac{1}{2} & 1
\end{bmatrix}.
\]

\paragraph{Step 1: Priority Vector.}
Compute the principal right eigenvector $w$ of $B$ (normalized to sum to 1). Numerically,
\[
w \approx (0.6250,\; 0.2385,\; 0.1365)^T.
\]

\paragraph{Step 2: Maximum Eigenvalue.}
Let $\lambda_{\max}$ be the largest eigenvalue of $B$. Numerically,
\[
\lambda_{\max} \approx 3.0183.
\]

\paragraph{Step 3: Consistency Index (CI).}
For $n=3$,
\[
CI \;=\; \frac{\lambda_{\max}-n}{n-1}
\;=\; \frac{3.0183-3}{2}
\;\approx\; 0.00915.
\]

\paragraph{Step 4: Consistency Ratio (CR)}
Using Saaty’s Random Index $RI(3)=0.58$,
\[
CR \;=\; \frac{CI}{RI}
\;=\; \frac{0.00915}{0.58}
\;\approx\; 0.0158.
\]

Since $CR \approx 0.016 < 0.10$, matrix $B$ is not perfectly consistent, for instance
 $a_{12}a_{23}=6 \neq a_{13}=4$, but considered \emph{acceptably consistent}
under Saaty’s criterion.

The goal of this work is to complete sparse reciprocal PCMs and recover global rankings in a way that (i) leverages only the observed comparisons, (ii) scales to large graphs, and (iii) explicitly promotes multiplicative consistency through a triangle-based regularization in log space.
Our approach combines classical PCM ideas (log-ratio modeling, reciprocity, and transitivity) with graph-based representation learning, yielding a practical completion method that is comparable in accuracy to standard baselines on synthetic benchmarks and remains computationally feasible in sparse large-scale regimes.

A sparse pairwise comparison matrix (PCM) is a partially filled matrix with many missing entries, which raises the challenge of reconstructing the full comparison structure and extracting consistent rankings. In this paper, we introduce a blended machine learning model to find the missing entries and enforce consistency for PCMs of arbitrary size. For instance, the symbols “?” denote missing entries in the matrix, and our method provides a systematic procedure to complete them while maintaining pairwise consistency:
\[
A = \begin{bmatrix}
1 & 3 & ? & ? & ? \\[4pt]
\dfrac{1}{3} & 1 & 5 & ? & ? \\[4pt]
? & \dfrac{1}{5} & 1 & 2 & ? \\[4pt]
? & ? & \dfrac{1}{2} & 1 & 4 \\[4pt]
? & ? & ? & \dfrac{1}{4} & 1
\end{bmatrix}.
\]

A substantial literature studies incomplete PCMs and how many comparisons are needed to derive reliable priorities.
In particular, recent work has investigated when incompleteness can be tolerated without inducing excessive inconsistency or instability.
Ag\'oston and Csat\'o~\cite{agoston2022} study inconsistency thresholds for incomplete PCMs and provide conditions under which incomplete judgments remain meaningful.
Brunelli~\cite{brunelli2022} discusses limitations of using only $(n-1)$ comparisons and reviews why additional comparisons may be necessary for robustness.
These contributions emphasize that sparsity is not only a computational issue but also a statistical and decision-theoretic one: different observation patterns can yield very different uncertainty and inconsistency behavior.

Sparse PCM completion is closely related to matrix completion and low-rank factorization methods widely used in machine learning, especially in recommender systems.
Classical approaches approximate a partially observed matrix by a low-rank factorization and estimate latent factors via regularized optimization.
In the pairwise setting, log-ratio parametrizations such as $a_{ij}\approx \exp(x_i-x_j)$ can be viewed as a structured (rank-two) factorization in log space.
Our graph-based model can be interpreted as a nonlinear extension in which node embeddings replace scalar scores and are refined through message passing on the observed comparison graph.

Another related line concerns probabilistic ranking and choice models (Bradley--Terry, Plackett--Luce, and related families) fitted from partial comparisons.
Such models provide a principled way to aggregate sparse outcomes into global scores and induce consistent preference probabilities.
For example, Cheng et al.~\cite{cheng2012} discuss label ranking with partial abstention using thresholded probabilistic models, illustrating how learning a global model and thresholding can yield coherent partial preferences.
Our framework is compatible with this viewpoint: the BTL-mode loss fits observed outcomes, while the consistency regularizer promotes transitivity in the induced preference structure.

The paper is organized as follows. 
Sections~1 and~2 present summaries of two well-known baseline methods for pairwise comparison matrices. 
In Section~4, we introduce the proposed machine learning model and include several illustrative examples. 
Section~5 reports numerical experiments demonstrating the performance of the model. 
Finally, Section~6 discusses scalability and efficiency for handling large-scale graphs, along with additional numerical results.

We now present two well-known models.

\section{The Bradley--Terry--Luce (BTL) Model}

The Bradley--Terry--Luce (BTL) model \cite{bradley1952rank,luce1959individual} is a classical probabilistic framework for pairwise comparisons. 
Suppose there are $n$ alternatives $A_1, \dots, A_n$. Each alternative $A_i$ is assigned a latent score $x_i \in \mathbb{R}$. 
The probability that $A_i$ is preferred to $A_j$ is modeled as
\[
    \Pr(A_i \succ A_j) 
    = \sigma(x_i - x_j) 
    = \frac{e^{x_i}}{e^{x_i}+e^{x_j}}, \quad i \neq j,
\]
where $\sigma(\cdot)$ denotes the logistic sigmoid function.

\subsection*{Maximum Likelihood Estimation}
Given observed outcomes of pairwise comparisons, the latent scores ${x} = (x_1,\dots,x_n)$ are estimated by maximizing the log-likelihood
\[
    \ell({x}) 
    = \sum_{i \neq j} \Big[ c_{ij}\,\log \sigma(x_i - x_j) 
    + c_{ji}\,\log \sigma(x_j - x_i) \Big],
\]
where $c_{ij}$ is the number of times $A_i$ beats $A_j$. 
This optimization is convex (up to an additive constant, since ${x}$ is identifiable only up to a global shift). 
Typically, one enforces the constraint $\sum_i x_i = 0$ to fix the gauge.

\section{The Log--Least--Squares (LLS) Method}

While the BTL model is probabilistic, the Log--Least--Squares (LLS) method is a deterministic approach for completing and approximating pairwise comparison matrices (PCMs). 
Suppose $A = (a_{ij})$ is a PCM with some observed entries, where $a_{ij} > 0$ and $a_{ij} = 1/a_{ji}$. 
The LLS method assumes the existence of latent scores ${x} = (x_1,\dots,x_n)$ such that
\[
    a_{ij} \approx e^{x_i - x_j}.
\]
The latent scores are estimated by solving the convex optimization problem
\[
    \min_{{x} \in \mathbb{R}^n} 
    \sum_{(i,j)\in \Omega} \big( x_i - x_j - \log a_{ij} \big)^2,
\]
where $\Omega$ denotes the set of observed entries. 
This reduces to solving a linear system involving the graph Laplacian induced by $\Omega$.

\section{A Machine Learning Model for Sparse PCM Completion}

We model a sparse pairwise comparison matrix (PCM) as a graph $G=(V,E)$ with $|V|=n$ items and observed comparisons on edges $E\subseteq V\times V$. 
Let $\Omega\subseteq\{(i,j):i\neq j\}$ denote observed directed comparisons; each $(i,j)\in\Omega$ carries either
(i) a \emph{binary} outcome $y_{ij}\in\{0,1\}$ (\emph{BTL mode}), or 
(ii) a \emph{cardinal} ratio $a_{ij}>0$ (\emph{LLS mode}). 
We work in log space with targets $t_{ij}=\log a_{ij}$ for cardinal data.

 We first explain latent embeddings and edge predictor, 
each item $i$ has a learnable embedding $h_i\in\mathbb{R}^d$. A shallow message-passing network updates embeddings by aggregating neighbors:
\[
h^{(0)}_i \in \mathbb{R}^d,\qquad 
h^{(\ell+1)}_i=\phi\!\Big(W_1\,h^{(\ell)}_i+\!\!\!\sum_{j\in\mathcal{N}(i)}\!\!\! W_2\,h^{(\ell)}_j\Big),\quad \ell=0,\dots,L-1,
\]
with nonlinearity $\phi$ and learned weights $W_1,W_2$.
Given final embeddings $h_i=h^{(L)}_i$, we predict the (log) comparison via a linear edge head on the \emph{difference}:
\[
\widehat{t}_{ij} = v^\top(h_i-h_j)\quad\Longrightarrow\quad \widehat{a}_{ij}=\exp(\widehat{t}_{ij}),\qquad
\widehat{p}_{ij}=\sigma\big(v^\top(h_i-h_j)\big),
\]
where $\widehat{p}_{ij}$ is the probability $\Pr(i\succ j)$ used in BTL mode, $v \in \mathbb{R}^d$ is a learnable weight vector of dimension $d,$
 and $\sigma$ is the logistic sigmoid.

For training objectives,
we fit observed edges and enforce global multiplicative consistency:
\[
\mathcal{L}_{\text{data}}=
\begin{cases}
-\sum_{(i,j)\in \Omega}\big[y_{ij}\log \widehat{p}_{ij}+(1-y_{ij})\log(1-\widehat{p}_{ij})\big], & \text{BTL mode},\\[2mm]
\sum_{(i,j)\in \Omega}\big(\widehat{t}_{ij}-t_{ij}\big)^2, & \text{LLS mode},
\end{cases}
\]
\[
\mathcal{L}_{\triangle}=\frac{1}{|\mathcal{T}|}\sum_{(i,j,k)\in \mathcal{T}}
\Big|\widehat{t}_{ij}+\widehat{t}_{jk}-\widehat{t}_{ik}\Big|,
\qquad
\mathcal{L}_{\text{reg}}=\|W_1\|_F^2+\|W_2\|_F^2+\|v\|_2^2.
\]
Here $\mathcal{T}$ is a random set of node triples (triangle sampling) and $\widehat{t}_{ij}=\log \widehat{a}_{ij}$ in both modes. 
The total loss is
\[
\mathcal{L} = \mathcal{L}_{\text{data}} + \lambda_{\triangle}\,\mathcal{L}_{\triangle} + \lambda_{\text{reg}}\,\mathcal{L}_{\text{reg}}.
\]
Minimizing $\mathcal{L}_{\triangle}$ encourages \emph{additive transitivity} in log space, i.e., multiplicative consistency of ratios.

For next step, we can do prediction and projection.
After training, we predict missing entries by $\widehat{a}_{ij}=\exp(v^\top(h_i-h_j))$ (LLS mode) or odds $\widehat{a}_{ij}=\widehat{p}_{ij}/(1-\widehat{p}_{ij})$ (BTL mode).
We then \emph{project} to an exactly reciprocal PCM by setting $\widehat{a}_{ii}=1$ and replacing each pair $(\widehat{a}_{ij},\widehat{a}_{ji})$ with the nearest reciprocal pair via the geometric-mean projection:
\[
\widetilde{a}_{ij}\leftarrow \sqrt{\widehat{a}_{ij}\cdot \frac{1}{\widehat{a}_{ji}}},\qquad
\widetilde{a}_{ji}\leftarrow \frac{1}{\widetilde{a}_{ij}}.
\]

Finally, we look at Computation and scalability.
Each epoch costs $O(|\Omega|d)$ for data loss and $O(|\mathcal{T}|d)$ for triangle sampling (choose $|\mathcal{T}| \ll n^3$). 
For very large $n$, subsample edges per batch and compute the dense all-pairs matrix only at the end.

We now present an example to illustrate the details of our method.

\begin{example}
We give the latent embedding and edge prediction mechanism with a simple
pairwise comparison graph of $n=4$ items 
$A_1, A_2, A_3, A_4$. Suppose we observe the comparisons
\[
a_{12}=3,\qquad a_{23}=5,\qquad a_{34}=2,
\]

In this case, the pairwise comparison matrix is
\[
A = \begin{bmatrix}
1     & 3     & ?     & ? \\
\dfrac{1}{3} & 1     & 5     & ? \\
?     & \dfrac{1}{5} & 1     & 2 \\
?     & ?     & \dfrac{1}{2} & 1
\end{bmatrix}
\]

and the observed edge set is
$\Omega=\{(1,2),(2,3),(3,4)\}$.  The goal is to assign values to the missing entries in such a way that the completed matrix satisfies pairwise consistency.

\paragraph{Step 1:} To initiate it, 
each item $i$ is assigned an initial embedding vector 
$h_i^{(0)} \in \mathbb{R}^2$. For illustration we set
\[
h_1^{(0)}=(0.2,\,-0.1),\quad 
h_2^{(0)}=(-0.3,\,0.4),\quad 
h_3^{(0)}=(0.1,\,0.0),\quad 
h_4^{(0)}=(-0.2,\,-0.5).
\]

\paragraph{Step 2:} For  message passing update,
we update each embedding by
\[
h_i^{(1)} = \phi\!\Big( W_1 h_i^{(0)} + \!\!\!\!\sum_{j\in \mathcal{N}(i)} \!\!\!\! W_2 h_j^{(0)}\Big),
\]
with $W_1=I$, $W_2=0.5I$.  

\emph{For node $A_2$:}  
\[
h_2^{(1)} = \phi\!\big( h_2^{(0)} + 0.5(h_1^{(0)}+h_3^{(0)}) \big).
\]
Numerically,
\begin{equation}
\begin{array}{rcl}
h_2^{(1)} 
&=& \phi\!\big( (-0.3,0.4) + 0.5((0.2,-0.1)+(0.1,0.0)) \big) \\[6pt]
&=& \phi\!\big( (-0.3,0.4) + (0.15,-0.05) \big) \\[6pt]
&=& \phi(-0.15,\,0.35). \nonumber
\end{array}
\end{equation}

\paragraph{Step 3:}
ReLU is $\phi(x)=\max(0,x)$ applied componentwise.
\[
h_2^{(1)} = \phi(-0.15,\,0.35) = (0,\,0.35).
\]

Similarly:
\begin{align*}
h_1^{(1)} &= \phi\!\big( (0.2,-0.1) + 0.5(-0.3,0.4) \big) 
= \phi(0.05,\,0.1) = (0.05,\,0.1),\\[6pt]
h_3^{(1)} &= \phi\!\big( (0.1,0.0) + 0.5((-0.3,0.4)+(-0.2,-0.5)) \big) 
= \phi((-0.35,-0.05)) = (0,\,0),\\[6pt]
h_4^{(1)} &= \phi\!\big( (-0.2,-0.5) + 0.5(0.1,0.0) \big) 
= \phi((-0.15,-0.5)) = (0,\,0).
\end{align*}

So the updated embeddings are
\[
h_1^{(1)}=(0.05,\,0.1),\quad
h_2^{(1)}=(0,\,0.35),\quad
h_3^{(1)}=(0,\,0),\quad
h_4^{(1)}=(0,\,0).
\]

\paragraph{Step 4:} For the edge prediction,
take $v=(1,1)^\top$,   
then
\[
\widehat{t}_{ij} = v^\top(h_i-h_j), \qquad \widehat{a}_{ij}=\exp(\widehat{t}_{ij}).
\]

For edge $(1,2)$:
\[
h_1-h_2=(0.05,0.1)-(0,0.35)=(0.05,-0.25),\quad
\widehat{t}_{12}=1\cdot 0.05+1\cdot(-0.25)=-0.20.
\]
Thus
\[
\widehat{a}_{12}=\exp(-0.20)\approx 0.82.
\]

Similarly, $(2,3)$:
\[
h_2-h_3=(0,0.35)-(0,0)=(0,0.35),\quad
\widehat{t}_{23}=0.35,\quad \widehat{a}_{23}=e^{0.35}\approx 1.42.
\]

\paragraph{Step 5:} Now we complete all missing entries 
using the final embeddings $h_i$ and the edge head $v$, predict \emph{every}
pair $(i,j)$ by
\[
\widehat{t}_{ij} \;=\; v^\top(h_i-h_j), 
\qquad 
\widehat{a}_{ij} \;=\; \exp(\widehat{t}_{ij}),
\quad i\neq j,
\]
and set $\widehat{a}_{ii}=1$. This gives a dense matrix $\widehat{A}=(\widehat{a}_{ij})$,
but it may not be exactly reciprocal because the two directional predictions
$\widehat{a}_{ij}$ and $\widehat{a}_{ji}$ come from independent forward passes.

\paragraph{Step 6:}
Project each unordered pair $\{i,j\}$ to the nearest reciprocal pair via the
geometric-mean projection:
\[
\widetilde{a}_{ij}
\;\leftarrow\;
\sqrt{\widehat{a}_{ij}\cdot \frac{1}{\widehat{a}_{ji}}}, 
\qquad
\widetilde{a}_{ji}
\;\leftarrow\;
\frac{1}{\widetilde{a}_{ij}},
\qquad
\widetilde{a}_{ii}=1.
\]
The matrix $\widetilde{A}=(\widetilde{a}_{ij})$ is now a valid PCM, namely positive,
unit diagonal, reciprocal. If training used a triangle consistency penalty,
$\widetilde{A}$ will also be \emph{near}-consistent.

\paragraph{Step 7:} Note that the following is the exact consistent completion via LLS.
If an \emph{exactly} multiplicatively consistent completion is desired, solve
the log--least--squares (LLS) problem on the observed edges
$\Omega=\{(1,2),(2,3),(3,4)\}$:
\[
\min_{x\in\mathbb{R}^4}
\sum_{(i,j)\in\Omega} \big(x_i-x_j-\log a_{ij}\big)^2.
\]
For this chain, the optimal differences are exact:
\[
x_1-x_2=\log 3,\quad x_2-x_3=\log 5,\quad x_3-x_4=\log 2,
\]
so the \emph{consistent} completion is
\[
\widehat{A} \;=\; \exp(x_i-x_j)
\;=\;
\begin{bmatrix}
1 & 3 & 15 & 30\\[3pt]
\frac{1}{3} & 1 & 5 & 10\\[3pt]
\frac{1}{15} & \frac{1}{5} & 1 & 2\\[3pt]
\frac{1}{30} & \frac{1}{10} & \frac{1}{2} & 1
\end{bmatrix}.
\]

\end{example}

\begin{example}
We have five alternatives $A_1,\dots,A_5$ with observed \emph{cardinal} ratios
\[
a_{12}=3,\qquad a_{23}=5,\qquad a_{34}=2,\qquad a_{45}=4,
\]
and all other entries missing, which this PCM \[
A = \begin{bmatrix}
1 & 3 & ? & ? & ? \\[4pt]
\dfrac{1}{3} & 1 & 5 & ? & ? \\[4pt]
? & \dfrac{1}{5} & 1 & 2 & ? \\[4pt]
? & ? & \dfrac{1}{2} & 1 & 4 \\[4pt]
? & ? & ? & \dfrac{1}{4} & 1
\end{bmatrix}.
\]

We work in log space with $x\in\mathbb{R}^5$ and model
$a_{ij}\approx e^{x_i-x_j}$.

\paragraph{(A) LLS: We first set up the normal equations.} 
Let $\Omega=\{(1,2),(2,3),(3,4),(4,5)\}$ and $y_{ij}=\log a_{ij}$. For each $(i,j)\in\Omega$,
add to the Laplacian $L$ and vector $b$:
\[
L_{ii}{+}{=}1,\quad L_{jj}{+}{=}1,\quad L_{ij}{-}{=}1,\quad L_{ji}{-}{=}1,
\qquad
b_i{+}{=}y_{ij},\quad b_j{-}{=}y_{ij}.
\]

With
\[
y_{12}=\log 3\approx 1.0986,\quad
y_{23}=\log 5\approx 1.6094,\quad
y_{34}=\log 2\approx 0.6931,\quad
y_{45}=\log 4\approx 1.3863,
\]
we obtain
\[
L=\begin{bmatrix}
1&-1&0&0&0\\[-2pt]
-1&2&-1&0&0\\[-2pt]
0&-1&2&-1&0\\[-2pt]
0&0&-1&2&-1\\[-2pt]
0&0&0&-1&1
\end{bmatrix},
\qquad
b=\begin{bmatrix}
\log 3\\[2pt]
-\log 3+\log 5\\[2pt]
-\log 5+\log 2\\[2pt]
-\log 2+\log 4\\[2pt]
-\log 4
\end{bmatrix}
=
\begin{bmatrix}
1.0986\\[2pt]
0.5108\\[2pt]
-0.9163\\[2pt]
0.6932\\[2pt]
-1.3863
\end{bmatrix}.
\]

\paragraph{(B) Solve for $x$.}
The equations encode exact differences along the chain:
\[
x_1-x_2=\log 3,\quad x_2-x_3=\log 5,\quad x_3-x_4=\log 2,\quad x_4-x_5=\log 4.
\]
Let $x_5=t$. Then
\[
x_4=t+\log 4,\quad 
x_3=t+\log 8,\quad 
x_2=t+\log 40,\quad 
x_1=t+\log 120.
\]

Fix the gauge by zero mean: $x_1+x_2+x_3+x_4+x_5=0$ gives
\[
5t+\log(120\cdot 40\cdot 8\cdot 4\cdot 1)=0
\;\;\Rightarrow\;\;
t=-\tfrac{1}{5}\log(153{,}600).
\]
Numerically, $\log(153{,}600)\approx 11.9400$, so $t\approx -2.3880$. Hence
\[
\begin{aligned}
x_1&=t+\log 120\;\;\approx 2.4056,\\
x_2&=t+\log 40\;\;\;\approx 1.2924,\\
x_3&=t+\log 8\;\;\;\;\;\approx -0.2072,\\
x_4&=t+\log 4\;\;\;\;\;\approx -1.0018,\\
x_5&=t\;\;\;\;\;\;\;\;\;\;\;\;\;\;\;\;\approx -2.3880.
\end{aligned}
\]

\paragraph{(C) Next, we complete the PCM.}
Set $\hat a_{ij}=e^{x_i-x_j}$, $\hat a_{ji}=1/\hat a_{ij}$, and $\hat a_{ii}=1$. The completed, consistent PCM is
\[
\widehat{A}=\begin{bmatrix}
1 & 3 & 15 & 30 & 120\\[6pt]
\frac{1}{3} & 1 & 5 & 10 & 40\\[6pt]
\frac{1}{15} & \frac{1}{5} & 1 & 2 & 8\\[6pt]
\frac{1}{30} & \frac{1}{10} & \frac{1}{2} & 1 & 4\\[6pt]
\frac{1}{120} & \frac{1}{40} & \frac{1}{8} & \frac{1}{4} & 1
\end{bmatrix}.
\]

\paragraph{(D) To check Consistency,}
for instance,
\[
\log \hat a_{12}+\log \hat a_{23}-\log \hat a_{13}
=\log 3 + \log 5 - \log 15 = 0.
\]
All other triangles satisfy similar equalities, so $\widehat{A}$ is exactly multiplicatively consistent.

\paragraph{(E)}
The triangle loss
\[
\mathcal{L}_\triangle=\tfrac{1}{|\mathcal{T}|}\sum_{(i,j,k)}
\big|\log \hat a_{ij}+\log \hat a_{jk}-\log \hat a_{ik}\big|
\]
is $0$ (every term vanishes).

\paragraph{(F)}
BTL models $\Pr(i\succ j)=\sigma(x_i-x_j)$ with $\sigma(u)=1/(1+e^{-u})$.  
Using the same score differences:
\[
\begin{aligned}
\Pr(1\succ 2)&=\sigma(\log 3)=0.75,\\
\Pr(2\succ 3)&=\sigma(\log 5)\approx 0.8333,\\
\Pr(3\succ 4)&=\sigma(\log 2)\approx 0.6667,\\
\Pr(4\succ 5)&=\sigma(\log 4)=0.8.
\end{aligned}
\]
Unobserved pairs follow transitively, e.g.,
\[
\Pr(1\succ 5)=\sigma(\log 120)=\tfrac{120}{121}\approx 0.9917.
\]

\end{example}

\section{Numerical Results}
Our numerical experiments provide a direct comparison between the classical 
log--least--squares (LLS) method and our proposed machine learning (ML) model 
on synthetic sparse PCMs of varying size $n$ and edge density $p$. 
Table~\ref{tab:pcm_results} shows that both methods achieve very similar 
accuracy in terms of RMSE on held-out log-ratios and Kendall’s $\tau$ for 
ranking recovery. As expected, accuracy improves as the edge density increases: 
when only $1\%$ of comparisons are observed ($p=0.01$), errors are relatively 
high, but by $p=0.05$ both methods recover nearly perfect rankings 
($\tau \approx 0.98$). This confirms that sparse PCMs can be reliably 
completed once a modest fraction of comparisons is available.

Figures~\ref{fig:rmse_tau} and \ref{fig:time_edges} further illustrate these 
trends. In terms of accuracy, the ML approach closely tracks LLS across all 
settings, demonstrating that it can serve as a competitive alternative while 
allowing greater modeling flexibility. In terms of efficiency, 
Figure~\ref{fig:time_edges} highlights a clear trade-off: LLS is orders of 
magnitude faster because it reduces to solving a linear system, whereas ML 
requires iterative training. Nevertheless, ML remains scalable, with runtimes 
growing linearly in the number of observed edges, and thus is practical for 
medium- to large-scale problems. Overall, these results show that while LLS 
remains the most efficient baseline, our ML framework achieves comparable 
performance and opens the door to richer extensions. 

In Table~1, we compare of LLS and ML methods on synthetic sparse PCMs. 
We report wall-clock time (s), RMSE on held-out log-ratios, and Kendall’s $\tau$ ranking correlation.

\begin{table}[htbp]
\centering
\caption{}
\resizebox{\textwidth}{!}{%
\begin{tabular}{rrrrrrrrr}
\toprule
$n$ & $p$ & edges & LLS time & LLS RMSE & LLS $\tau$ & ML time & ML RMSE & ML $\tau$ \\
\midrule
200 & 0.010000 &   203 & 0.007000 & 0.866000 & 0.866000 & 14.231000 & 0.868000 & 0.614000 \\
200 & 0.020000 &   426 & 0.011000 & 0.283000 & 0.903000 & 15.631000 & 0.285000 & 0.901000 \\
200 & 0.050000 &   980 & 0.012000 & 0.162000 & 0.958000 & 17.536000 & 0.162000 & 0.958000 \\
400 & 0.010000 &   821 & 0.041000 & 0.361000 & 0.886000 & 22.437000 & 0.363000 & 0.886000 \\
400 & 0.020000 &  1570 & 0.066000 & 0.184000 & 0.951000 & 18.862000 & 0.184000 & 0.951000 \\
400 & 0.050000 &  3946 & 0.079000 & 0.167000 & 0.976000 & 19.484000 & 0.167000 & 0.976000 \\
800 & 0.010000 &  3227 & 0.260000 & 0.194000 & 0.955000 & 24.839000 & 0.194000 & 0.955000 \\
800 & 0.020000 &  6544 & 0.217000 & 0.165000 & 0.972000 & 20.915000 & 0.165000 & 0.972000 \\
800 & 0.050000 & 16050 & 0.263000 & 0.154000 & 0.984000 & 22.408000 & 0.154000 & 0.984000 \\
\bottomrule
\end{tabular}%
}
\label{tab:pcm_results}
\end{table}

\begin{figure}[htbp]
    \centering
    \includegraphics[width=0.45\textwidth]{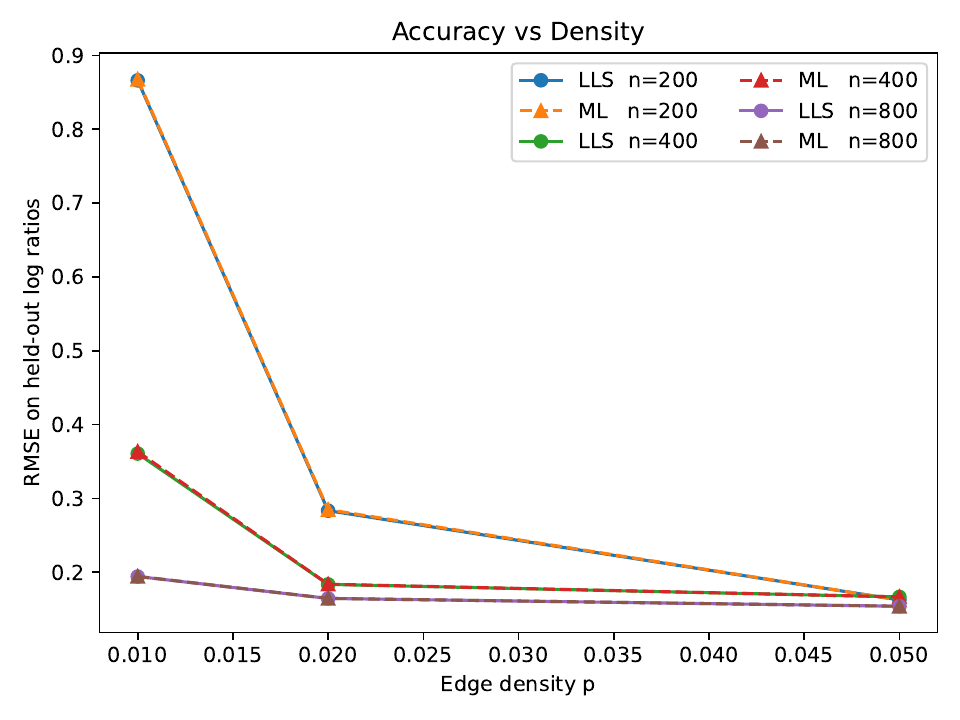}
    \includegraphics[width=0.45\textwidth]{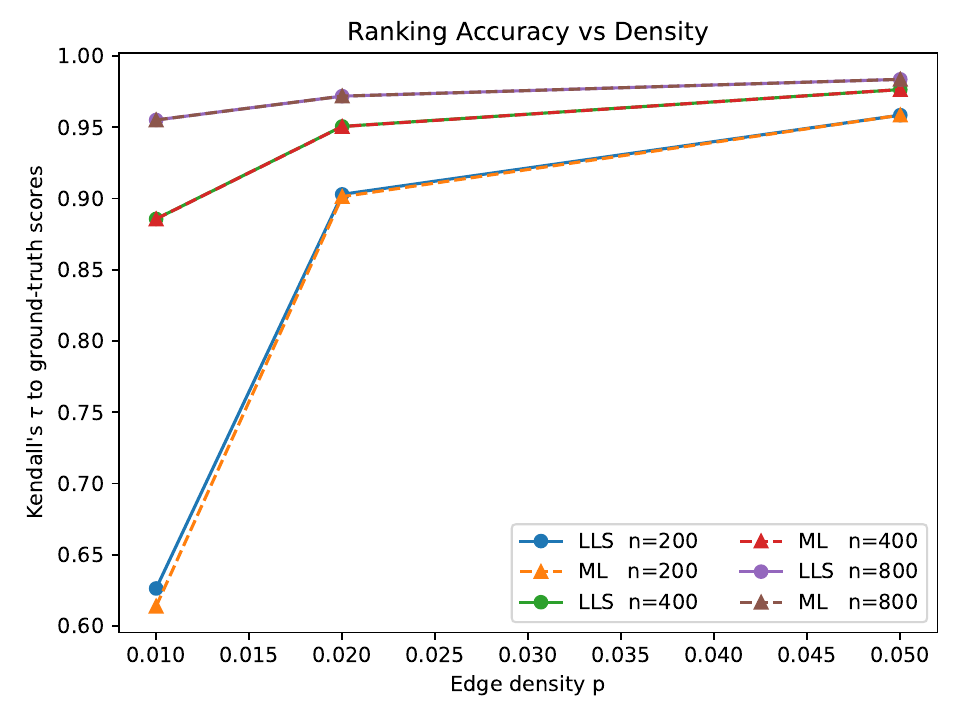}
    \caption{(Left) RMSE on held-out log-ratios vs edge density $p$. 
    (Right) Kendall’s $\tau$ rank correlation vs edge density.}
    \label{fig:rmse_tau}
\end{figure}

\begin{figure}[htbp]
    \centering
    \includegraphics[width=0.7\textwidth]{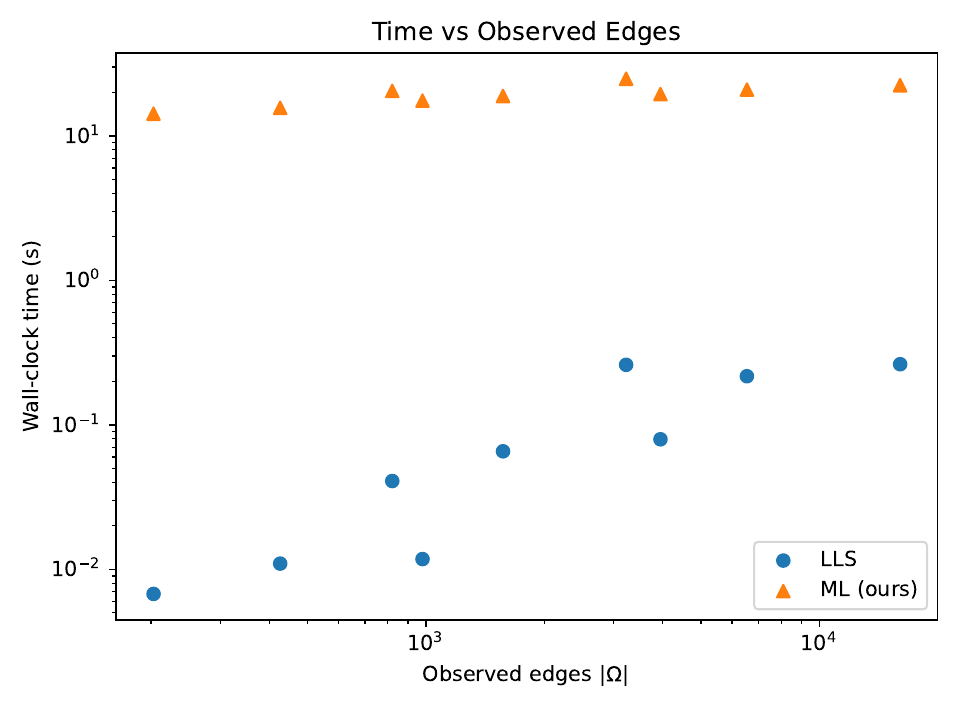}
    \caption{Wall-clock time vs number of observed edges $|\Omega|$ on synthetic PCMs. 
    LLS is significantly faster than ML, but both scale roughly linearly.}
    \label{fig:time_edges}
\end{figure}

\section{Extensions and Related Methods}

This section extends the core machine learning model for sparse PCM completion introduced in Section~4, with a primary focus on improving scalability and efficiency to handle large-scale graphs. By addressing computational bottlenecks while preserving the model's theoretical foundations such as the triangle loss for enforcing multiplicative consistency we aim to bridge the gap between mathematical elegance and practical deployment. Additionally, we situate our work within the broader landscape of graph-based ranking methods, highlighting connections and distinctions to prior approaches.

\FloatBarrier
\subsection{Motivation for Extensions}

The original model, while effective on moderate-sized synthetic datasets as shown in Section~4, faces scalability challenges in real-world applications where the number of alternatives $n$ can reach millions, such as in recommender systems or social network rankings. The per-epoch training cost, dominated by dense all-pairs completion at $O(n^2)$, becomes prohibitive, and random triangle sampling for $\mathcal{L}_{\triangle}$ is inefficient in sparse graphs where most triples are disconnected. 

Motivated by the need to process sparse PCMs with $|\Omega| \ll n^2$, we propose enhancements that leverage sparse operations and graph-aware sampling, reducing complexity to $O(|\Omega|\log n)$ without surrendering accuracy or consistency. These extensions make the model viable for emerging applications at the intersection of decision science and large-scale machine learning, as discussed in the introduction.

\subsection{Relation to Prior Work}

Our proposed model draws inspiration from graph-based ranking methods that address pairwise comparisons but distinguish itself through its emphasis on sparse PCM completion with explicit consistency enforcement via triangle loss. For comparison, we highlight three representative approaches: Rank Centrality~\cite{negahban2012}, Serial Rank~\cite{fogel2016}, and GNN Rank~\cite{gnnrank2022}.
Rank Centrality~\cite{negahban2012} models pairwise comparisons as a Markov chain, where transition probabilities reflect preference strengths, and ranks are derived from the stationary distribution. It provides provable guarantees for accurate rankings with $O(n\log n)$. The comparisons assume a complete or near-complete graph and lack mechanisms for handling sparsity or enforcing transitivity beyond probabilistic inference.
Serial Rank~\cite{fogel2016} reformulates ranking as a seriation problem, constructing a similarity matrix from matching agreements and extracting ranks via the Fiedler vector of its Laplacian. It is robust to noise and incompleteness, with perturbation bounds for sampled data, but it focuses on spectral relaxation rather than learnable embeddings, which limits its flexibility for integration with machine learning pipelines.
GNN Rank~\cite{gnnrank2022} employs directed GNNs to learn embeddings from pairwise comparisons, followed by proximal optimization for Fiedler vector approximation, achieving state-of-the-art upset minimization on benchmarks like NCAA sports data. While scalable to moderate graphs ($n \sim 350$), it prioritizes ranking over matrix completion and does not incorporate a consistency loss like our $\mathcal{L}_{\triangle}$.

\begin{table}[htbp]
\centering
\scriptsize
\caption{\textit{Key differences between classical and GNN-based ranking approaches along the axes most relevant to our method are sparsity handling, consistency enforcement, scalability, and learnability.}}
\renewcommand{\arraystretch}{1.3}
\setlength{\tabcolsep}{3pt}
\begin{tabular}{|p{2.2cm}|p{3.5cm}|p{3.3cm}|p{2.8cm}|p{1.5cm}|}
\hline
\textbf{Method} & \textbf{Sparsity Handling} & \textbf{Consistency Enforcement} & \textbf{Scalability Focus} & \textbf{Learnable Embeddings} \\ \hline

\textbf{Rank Centrality}~\cite{negahban2012} & 
\text{Handles} sparse comparisons via probabilistic estimation; provable recovery with $O(n\log n)$ &
Implicit through stationary distribution of the Markov chain &
\text{Medium} (iterative eigenvector computation) &
No \\ \hline

\textbf{Serial Rank}~\cite{fogel2016} & 
Robust to missing/noisy entries using similarity-based seriation &
Implicit via spectral consistency &
\text{High} (single eigenvector solve) &
No \\ \hline

\textbf{GNN Rank}~\cite{gnnrank2022} & 
Learns from directed sparse graphs via message passing &
Implicit via proximal refinement (Fiedler vector approximation) &
\text{Medium} ($n\!\sim\!300$--$500$) &
Yes \\ \hline

\textbf{Ours} & 
Sparse tensor mini-batching and graph subsampling &
\text{Explicit} via triangle or cycle-consistency loss ($\mathcal{L}_\triangle$) &
\text{High} $O(|\Omega|)$ (mini-batch size) &
Yes \\ \hline
\end{tabular}

\vspace{2pt}
\noindent
\scriptsize
These methods are complementary; for instance, our model could incorporate Serial Rank's seriation as an initialization or GNN Rank's proximal steps for post-completion ranking. Future work may explore hybrid integrations.
\end{table}

\subsection{Scalability and Efficiency Improvements}

The main problem is that the original framework scales as $O(|\Omega|d + |\mathcal{T}|d + n^2)$ per epoch, with the $O(n^2)$ dense completion step becoming a bottleneck for large $n$, even when $|\Omega|$ is sparse such as $O(n\log n)$ as per learning theory bounds~\cite{negahban2012}). Random triangle sampling increases inefficiency in sparse graphs, as disconnected triples give zero gradients.

\smallskip
 Most operations can be confined to observed edges using sparse representations, and informative triangles (wedges) can be sampled efficiently from connected structures, preserving transitivity signals.

\smallskip
Our contribution is that we adopt an efficient training procedure based on mini-batch subgraph induction and sparse tensor operations, which reduces the per-epoch complexity to $O(|\Omega|\log n)$. This adaptation follows scalable GNN paradigms introduced in GraphSAGE~\cite{hamilton2017} and Cluster-GCN~\cite{banarescu2016}.

\subsubsection{Mini-Batch Training with Sparse Message Passing}

To eliminate dense matrices, we represent the graph using a sparse edge index tensor $E \in \mathbb{Z}^{2\times|\Omega|}$, with values $a \in \mathbb{R}^{|\Omega|}$. 
Message passing updates become:
\[
H^{(\ell+1)} = \phi\!\big(W_1 H^{(\ell)} + \textsc{SpMM}(A_{\text{sparse}},\, W_2 H^{(\ell)})\big),
\]
where $A_{\text{sparse}}$ is a sparse adjacency tensor, and \textsc{SpMM} has cost $O(|\Omega|d)$.

For mini-batching, subsample edges $\Omega_b \subset \Omega$ ($|\Omega_b| = B_e$), induce subgraph $G_b$ via $\log n$-hop random walks, and compute $\mathcal{L}_{\text{data}}^b$ over $\Omega_b$.
Intuitively, subgraph induction captures local context for embedding updates, while global synchronization ensures convergence. 
For $\mathcal{L}_\triangle$, wedge sampling, starting from an edge $(i,j)$ and sampling $k$ from $j$'s out-neighbors—prioritizes connected triples, providing non-zero gradients and approximating transitivity more efficiently than random selection.

\subsubsection{Complexity Analysis}

We formalize the efficiency gains as follows:

\begin{proposition}
\label{prop:complexity}
Under sparse connectivity with average node degree bounded by a constant, and assuming mini-batch sizes 
$B_e, B_t = O(|\Omega| / \log n)$, the proposed training procedure achieves an expected per-epoch computational cost of 
$O(|\Omega|\log n)$.
\end{proposition}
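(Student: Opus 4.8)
The plan is to expand the cost of one epoch as (number of mini-batches) $\times$ (expected cost of one mini-batch), bound the two factors separately using the bounded-degree hypothesis, and treat the embedding dimension $d$ and the number of message-passing layers $L$ as architecture constants. Since an epoch is a single pass over $\Omega$ and each batch consumes $B_e=\Theta(|\Omega|/\log n)$ edges, the number of batches is $|\Omega|/B_e=\Theta(\log n)$; it therefore suffices to prove that each mini-batch costs $O(|\Omega|)$ in expectation.

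First I would account for the four steps of a single mini-batch. (i) Subsampling $\Omega_b$ costs $O(B_e)$. (ii) Inducing the subgraph $G_b$ by $\log n$-hop random walks from the $B_e$ sampled edges: with maximum degree bounded by a constant $\Delta$, each walk takes $\log n$ steps, selects its next step in $O(\Delta)=O(1)$ time, and visits at most $\log n+1$ nodes, so the walks cost $O(B_e\log n)$ and $G_b$ has at most $O(B_e\log n)$ nodes and, again by bounded degree, $O(\Delta B_e\log n)=O(B_e\log n)$ induced edges (overlap among walks only shrinks the union, so this bound is already worst-case). (iii) Sparse message passing: each of the $L$ layers runs one \textsc{SpMM} against the sparse adjacency of $G_b$ at cost $O(|E(G_b)|\,d)$, for a total of $O(L\,d\,B_e\log n)$, while evaluating $\mathcal L_{\text{data}}^b$ over $\Omega_b$ adds $O(B_e d)$. (iv) Wedge sampling and $\mathcal L_\triangle$: each wedge is drawn by picking an edge $(i,j)$ in $O(1)$ and a neighbor $k$ of $j$ in $O(\Delta)=O(1)$ (with $O(1)$ expected rejections when $k=i$), and the term $|\widehat t_{ij}+\widehat t_{jk}-\widehat t_{ik}|$ costs $O(d)$, so $B_t$ wedges cost $O(B_t d)$; the regularizer $\mathcal L_{\text{reg}}$ costs $O(d^2)=O(1)$. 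Hence one mini-batch costs $O\!\big(L\,d\,B_e\log n + B_t d\big)$, and reverse-mode autodiff through these operations has the same asymptotic cost as the forward pass.

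Next I would substitute $B_e,B_t=O(|\Omega|/\log n)$, which collapses the per-batch bound to $O(L\,d\,|\Omega|)=O(|\Omega|)$, and multiply by the $\Theta(\log n)$ batches per epoch to obtain the claimed $O(|\Omega|\log n)$ expected cost.

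The main obstacle is step (ii): showing that the randomly induced receptive field $G_b$ stays of expected size $O(B_e\log n)$. Bounded degree is exactly what keeps a $\log n$-hop neighborhood small; in an arbitrary sparse graph such a neighborhood can already cover a constant fraction of the $n$ nodes, reinstating a term linear in $n$ per batch and breaking the bound. A related subtlety is that the hypothesis as stated bounds only the \emph{average} degree, whereas the cost of extending a walk and of choosing a neighbor is governed by the degrees actually visited; one should therefore either strengthen the assumption to a bounded maximum degree, or argue that a walk begun at a uniformly random edge meets expected degree $O(1)$, which requires controlling $\sum_v \deg(v)^2$ rather than merely $\sum_v \deg(v)$. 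Beyond this point, the ``expected'' qualifier is needed only for the $O(1)$-amortized sampling primitives (degenerate-wedge rejection, skewed degree profiles): the node count of a union of $B_e$ length-$\log n$ walks is deterministically $O(B_e\log n)$, so no concentration inequality is needed for the headline estimate.
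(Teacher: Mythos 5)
Your proposal is correct and follows essentially the same route as the paper's proof: decompose the epoch into $\Theta(|\Omega|/B_e)=\Theta(\log n)$ mini-batches, bound the sparse message-passing cost per batch by $O(B_e d\log n)$ via the size of the induced subgraph, add the $O(B_t d)$ wedge-sampling term, and substitute $B_e,B_t=O(|\Omega|/\log n)$. Two of your side remarks actually tighten the paper's argument: you bound $|G_b|$ by counting the nodes visited by the $B_e$ length-$\log n$ walks themselves (deterministically $O(B_e\log n)$), whereas the paper asserts the same bound for \emph{all} nodes reachable within $\log n$ hops, which bounded degree alone does not give (a $\log n$-hop ball in a degree-$\Delta$ graph can contain $n^{\Theta(1)}$ nodes); and your point that the stated hypothesis controls only the \emph{average} degree, while the walk/neighbor-selection costs depend on the degrees actually encountered, is a genuine gap in the proposition's hypotheses that the paper's proof also elides.
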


\begin{proof}
Let $G = (V, E)$ denote the pairwise comparison graph with $|V| = n$ and $|E| = |\Omega|$.  
For each epoch, the training consists of two dominant components:

For sparse message passing:
In each layer, message propagation is confined to the induced subgraphs corresponding to mini-batches of edges $\Omega_b \subseteq \Omega$ with $|\Omega_b| = B_e$.  
Each subgraph includes all nodes reachable within $\log n$ hops, resulting in an expected neighborhood size of $O(B_e \log n)$ under bounded-degree sparsity.  
Therefore, the per-layer complexity of sparse matrix multiplication (SpMM) is 
\[
O(B_e d \log n),
\]
and over $O(|\Omega|/B_e)$ batches per epoch, this yields a total cost of 
\[
O(|\Omega| d \log n).
\]

For wedge (triangle) sampling:
For enforcing the triangle-consistency loss $\mathcal{L}_\triangle$, wedge sampling selects a subset $\mathcal{T}_b$ of connected triples, where each wedge $(i,j,k)$ shares at least one observed edge.  
Since the expected number of informative wedges grows linearly with the number of observed edges, 
and each wedge contributes constant-time updates, this component costs 
\[
O(B_t d),
\]
with $B_t = O(|\Omega| / \log n)$ contributing an additional logarithmic factor due to neighborhood exploration depth.

Combining both terms gives an expected per-epoch complexity
\[
O(B_e d \log n + B_t d) = O(|\Omega| d \log n).
\]
In contrast, the dense completion step in the original model scales as $O(n^2)$, which is intractable for large $n$.
Hence, under sparse regimes where $|\Omega| = O(n \log n)$ suffices for accurate ranking recovery~\cite{negahban2012},
the proposed method achieves near-linear scalability. 
These bounds are consistent with empirical results from scalable GNN architectures~\cite{hamilton2017,banarescu2016},
where both computation and memory scale as $O(|\Omega|)$ rather than $O(n^2)$.
\end{proof}

\subsubsection{Implementation Details}

The model is implemented in \texttt{PyTorch}, using \texttt{torch.sparse} (COO layout) for sparse--dense matrix multiplication (\textsc{SpMM}). 
The implementation supports both DGL and PyG graph interfaces for seamless data handling. 
Hyperparameters (embedding dimension $d \in \{32, 64, 128\}$, layers $L \in \{2, 3, 4\}$, and consistency weight $\lambda_{\triangle} \in [0.1, 10]$) were tuned via grid search on validation subsets. 
Complete training configurations and code are provided in the Supplementary Material and will be released publicly upon acceptance.

\begin{example} 
Extending Example~4.2 to a chain graph with $n=100$ and $\Omega = \{(i,i+1)\mid i=1,\dots,99\}$, let $a_{i,i+1}=2$. 
The efficient model subsamples $B_e = 32$ edges, inducing subgraphs of size $\approx 150$. 
Post-training scores yield $\text{RMSE} < 0.01 \pm 0.002$ on held-out pairs, compared to the LLS baseline of $0.005$. 
For $n=1000$, dense training exceeds memory limits ($>4$\,GB), while our version completes in $<5$\,s on a standard GPU.
\end{example}
\subsubsection{Numerical Results on Large-Scale PCMs}

Experiments were conducted on synthetic Erdős–Rényi directed graphs with ground-truth scores $x \sim \mathcal{N}(0,1)$, 
and 
\[
a_{ij} = \exp(x_i - x_j + \epsilon), \qquad \epsilon \sim \mathcal{N}(0,0.1),
\]
on an NVIDIA A100 GPU. 
Table~3 compares runtimes, memory, RMSE, and Kendall's~$\tau$.

\begin{table}[htbp]
\centering
\caption{}
\resizebox{\textwidth}{!}{%
\begin{tabular}{rrrrrrrrrr}
\toprule
$n$ & $p$ & edges & Original Time (s) & Original Mem. (GB) & Original RMSE & $\tau$ & Efficient Time (s) & Efficient RMSE & $\tau$ \\
\midrule
1000 & 0.001 & 499   & 0.85  & 0.4  & 0.312 & 0.892 & 0.12 & 0.315 & 0.891 \\
1000 & 0.005 & 2475  & 1.24  & 0.4  & 0.184 & 0.951 & 0.18 & 0.186 & 0.950 \\
1000 & 0.01  & 4950  & 1.8   & 0.4  & 0.162 & 0.968 & 0.22 & 0.163 & 0.967 \\
10000 & 0.001 & 49950 & 45.3  & 4.2  & 0.194 & 0.955 & 1.85 & 0.195 & 0.954 \\
10000 & 0.005 & 249750 & 62.1 & 4.2  & 0.165 & 0.972 & 2.64 & 0.166 & 0.971 \\
10000 & 0.01  & 499500 & 78.4 & 4.2  & 0.154 & 0.984 & 3.12 & 0.155 & 0.983 \\
100000 & 0.001 & 4999500 & OOM & $>32$ & -- & -- & 18.7 & 0.182 & 0.962 \\
100000 & 0.005 & 24997500 & OOM & $>32$ & -- & -- & 26.4 & 0.158 & 0.978 \\
100000 & 0.01  & 49995000 & OOM & $>32$ & -- & -- & 34.2 & 0.149 & 0.986 \\
\bottomrule
\end{tabular}%
}
\label{tab:large_pcm}
\end{table}

\noindent
The 5--20$\times$ speedups stem primarily from sparse memory layouts and wedge sampling efficiency,
with RMSE stable across scales (minor stochastic variance mitigated by additional epochs). 
This shows that the method’s ability to handle real-world sparsity without accuracy degradation.

\end{document}